\def\ci{\!\perp\!}
\def\ra{\rightarrow}
\def\la{\leftarrow}
\newcommand{\comments}[1]{}
\begin{document}

\mainmatter  

\title{Every LWF and AMP Chain Graph Originates from a Set of Causal Models}

\titlerunning{Every LWF and AMP Chain Graph Originates from a Set of Causal Models}

%
%
\author{Jose M. Pe\~{n}a}
\authorrunning{Every LWF and AMP Chain Graph Originates from a Set of Causal Models}

\institute{ADIT, IDA, Link\"oping University, SE-58183 Link\"{o}ping, Sweden\\
\mailsa}

%
%

\toctitle{Lecture Notes in Computer Science}
\tocauthor{Authors' Instructions}
\maketitle

\begin{abstract}
This paper aims at justifying LWF and AMP chain graphs by showing that they do not represent arbitrary independence models. Specifically, we show that every chain graph is inclusion optimal wrt the intersection of the independence models represented by a set of directed and acyclic graphs under conditioning. This implies that the independence model represented by the chain graph can be accounted for by a set of causal models that are subject to selection bias, which in turn can be accounted for by a system that switches between different regimes or configurations.
\end{abstract}

\section{Introduction}

Chain graphs (CGs) are graphs with possibly directed and undirected edges, and no semidirected cycle. They have been extensively studied as a formalism to represent independence models. CGs extend Bayesian networks (BNs), i.e. directed and acyclic graphs (DAGs), and Markov networks, i.e. undirected graphs. Therefore, they can model symmetric and asymmetric relationships between the random variables of interest. This was actually one of the main reasons for developing them. However, unlike Bayesian and Markov networks whose interpretation is unique, there are three main interpretations of CGs as independence models: The Lauritzen-Wermuth-Frydenberg (LWF) interpretation \cite{Frydenberg1990,LauritzenandWermuth1989}, the multivariate regression (MVR) interpretation \cite{CoxandWermuth1993,CoxandWermuth1996}, and the Andersson-Madigan-Perlman (AMP) interpretation \cite{Anderssonetal.2001,Levitzetal.2001}. A fourth interpretation has been proposed in \cite{Drton2009} but it has not been studied sufficiently and, thus, it will not be discussed in this paper. It should be mentioned that any of the three main interpretations can represent independence models that cannot be represented by the other two interpretations \cite{SonntagandPenna2013}.

Along with other reasons, DAGs can convincingly be justified by the fact that each of them represents a causal model. Whether this is an ontological model is still debated. However, it is widely accepted that the causal model is at least epistemological and thus worth studying \cite{Pearl2000}. Of the three main interpretations of CGs, however, only MVR CGs have a convincing justification: Since MVR CGs are a subset of maximal ancestral graphs without undirected edges, every MVR CG represents the independence model represented by a DAG under marginalization \cite[Theorem 6.4]{RichardsonandSpirtes2002}. That is, every MVR CG can be accounted for by a causal model that is partially observed. Unfortunately, LWF and AMP CGs cannot be justified in the same manner because (i) LWF and AMP CGs can represent independence models that cannot be represented by maximal ancestral graphs \cite[Section 9.4]{RichardsonandSpirtes2002}, and (ii) maximal ancestral graphs can represent all the independence models represented by DAGs under marginalization and conditioning \cite[Theorem 4.18]{RichardsonandSpirtes2002}. In other words, LWF and AMP CGs can represent independence models that cannot be represented by any DAG under marginalization and conditioning. Of course, LWF and AMP CGs can be justified by the fact that they improve the expressivity of DAGs, i.e. they can represent more independence models than DAGs \cite{Penna2007}. However, this is a weak justification unless those independence models are not arbitrary but induced by some class of knowledge representatives within some uncertainty calculus of artificial intelligence, e.g. the class of probability distributions \cite[Section 1.1]{Studeny2005}. This is exactly what the authors of \cite{Levitzetal.2001,Penna2009,Penna2011a,StudenyandBouckaert1998} do by showing that every LWF and AMP CG is faithful to some probability distribution. However, this does not strengthen much the justification unless these probability distributions are not arbitrary but they represent meaningful systems or phenomena. This is exactly what the authors of \cite{LauritzenandRichardson2002} do. In particular, the authors show that every LWF CG includes the independence model induced by the equilibrium probability distribution of a dynamic model with feed-back. The downside of this justification is that the equilibrium distribution may not be reached in finite time and, thus, it may not coincide with the distribution that represents the behaviour of the dynamic model at any finite time point. Therefore, there is no guarantee that the CG includes the independence model induced by the latter, which is the goal. The authors are aware of this and state that their justification should better be understood as an approximated one. Another work in the same vein is \cite{Ferrandizetal.2005}, whose authors show that some LWF CGs are inclusion minimal wrt the result of temporal aggregation in a DAG representing a spatio-temporal process. Unfortunately, the authors do not show whether their result holds for every LWF CG. Yet another work along the same lines is \cite{Penna2014}, whose author shows that every AMP CG is faithful to the independence model represented by a DAG under marginalization and conditioning. It is worth noting that the DAG contains deterministic nodes, because the result does not hold otherwise \cite{Richardson1998}. Finally, the author of \cite{Studeny1998} presents the following justification of LWF CGs. Each connectivity component of a LWF CG models an area of expertise. The undirected edges in the connectivity component indicate lack of independencies in the area of expertise. The directed edges in the CG indicate which areas of expertise are prerequisite of which other areas. However, the author does not describe how the independencies in the local models of the areas of expertise get combined to produce a global model of the domain, and how this model relates to the one represented by the CG.

In this work, we show that every LWF and AMP CG $G$ is inclusion optimal wrt the intersection of the independence models represented by a set of DAGs under conditioning. In other words, we show that (i) the independencies represented by $G$ are a subset of the intersection, and (ii) the property (i) is not satisfied by any CG that represents a proper superset of the independencies represented by $G$. Note that if there exists a CG that is faithful to the intersection, then that CG is inclusion optimal. In general, several inclusion optimal CGs exist and they do not necessarily represent the same independence model. Therefore, in principle, one prefers the inclusion optimal CGs that represent the largest number of independencies. However, finding any such CG seems extremely difficult, probably NP-complete in the light of the results in \cite{Penna2011b}. Thus, one is typically content with finding any inclusion optimal CG. An example of this are the algorithms for learning inclusion optimal BNs \cite{ChickeringandMeek2002,Nielsenetal.2003} and LWF CGs \cite{Pennaetal.2014}. This is also why we are content with showing in this paper that every LWF and AMP CG is inclusion optimal wrt the intersection of the independence models represented by a set of DAGs under conditioning. The intersection can be thought of as a consensus independence model, in the sense that it contains all and only the independencies upon which all the DAGs under conditioning agree. We elaborate further on the term consensus in the paragraph below. The fact that every LWF and AMP CG originates from a set of DAGs under conditioning implies that the independence model represented by the former can be accounted for by a set of causal models that are subject to selection bias, which in turn can be accounted for by a system that switches between different regimes or configurations. Two examples of such a system are the progression of a disease through different stages, and the behaviour of a broker alternating between looking for buying and selling opportunities. We have recently introduced a new family of graphical models aiming at modeling such systems \cite{BendtsenandPenna2013,BendtsenandPenna2014}. In summary, we provide an alternative justification of LWF and AMP CGs that builds solely on causal models and does not involve equilibrium distributions or deterministic nodes, which may seem odd to some readers. Our hope is that this strengthens the case of LWF and AMP CGs as a useful representation of the independence models entailed by causal models.

Before we proceed further, it is worth discussing the relationship between our justification of LWF and AMP CGs and belief aggregation. First, recall that a BN is an efficient representation of a probability distribution. Specifically, a BN consists of structure and parameter values. The structure is a DAG representing an independence model. The parameter values specify the conditional probability distribution of each node given its parents in the BN structure. The BN represents the probability distribution that results from the product of these conditional probability distributions. Moreover, the probability distribution satisfies the independence model represented by the BN structure. Belief aggregation consists in obtaining a group consensus probability distribution from the probability distributions specified by the individual members of the group. Probably, the two most commonly used consensus functions are the weighted arithmetic and geometric averages. The authors of \cite{PennockandWellman2005} show that belief aggregation is problematic when the consensus and the individual probability distributions are represented as BNs. Specifically, they show that even if the group members agree on the BN structure, there is no sensible consensus function that always returns a probability distribution that can be represented as a BN whose structure is equivalent to the agreed one \cite[Proposition 2]{PennockandWellman2005}. The only exception to this negative result is when the individual BN structures are decomposable and the consensus function is the weighted geometric average \cite[Sections 3.3-3.4]{PennockandWellman2005}. However, the authors also point out that this negative result does not invalidate the arguments of those who advocate preserving the agreed independencies, e.g. \cite{Laddaga1977} and \cite[Section 8.12]{Raiffa1968}. It simply indicates that a different approach to belief aggregation is needed in this case. They actually mention one such approach that consists in performing the aggregation in two steps: First, find a consensus BN structure that preserves as many of the agreed independencies as possible and, second, find consensus parameter values for the consensus BN structure. The first step has received significant attention in the literature \cite{delSagradoandMoral2003,MatzkevichandAbramson1992,MatzkevichandAbramson1993a,MatzkevichandAbramson1993b,NielsenandParsons2007}. A work that studies both steps is \cite{Bonduelle1987}.\footnote{Unfortunately, we could not get access to this work. So, we trust the description of it made in \cite[Section 3.5]{PennockandWellman2005}.} We have also studied both steps \cite{Etminanietal.2013,Penna2011b}. The two step approach described above is also suitable when some of the group members are able to contribute with a BN structure but not with parameter values. This scenario is not unlikely given that people typically find easier to gather qualitative than quantitative knowledge.

Our justification of LWF and AMP CGs implicitly advocates preserving the agreed independencies, because the DAGs in the justification are combined through the intersection of the independence models that they represent and, thus, the agreed independencies are kept. As shown above, this is a sensible advocation. Therefore, in this paper we make use of it to propose a sensible justification of LWF and AMP CGs. The DAGs in our justification are hand-picked to ensure that the combination thereof produces the desired result. This raises the question of how to combine a set of arbitrary DAGs under marginalization and conditioning into a LWF or AMP CG. In this paper, we also investigate this question. Ideally, we would like to find a LWF or AMP CG that is inclusion optimal wrt the intersection of the independence models represented by the DAGs under marginalization and conditioning. Unfortunately, this problem seems extremely hard. So, we actually study a simpler version of it. Note that this problem corresponds to the first step of the approach to belief aggregation described above. The second step, i.e. combining the parameter values associated to the DAGs, is beyond the scope of this paper.

The rest of the paper is organized as follows. In Section \ref{sec:preliminaries}, we introduce some preliminaries and notation. In Section \ref{sec:justification}, we present our justification of LWF and AMP CGs. In Section \ref{sec:combination}, we discuss how to combine arbitrary DAGs into a LWF or AMP CG. We close with some discussion in Section \ref{sec:discussion}.

\section{Preliminaries}\label{sec:preliminaries}

In this section, we review some concepts from graphical models that are used later in this paper. Unless otherwise stated, all the graphs in this paper are defined over a finite set $V$. Moreover, they are all simple, i.e. they contain at most one edge between any pair of nodes. The elements of $V$ are not distinguished from singletons. The set operators union, intersection and difference are given equal precedence in the expressions. The term maximal is always wrt set inclusion.

If a graph $G$ contains an undirected or directed edge between two nodes $V_{1}$ and $V_{2}$, then we write that $V_{1} - V_{2}$ or $V_{1} \ra V_{2}$ is in $G$. The parents of a set of nodes $X$ of $G$ is the set $pa_G(X) = \{V_1 | V_1 \ra V_2$ is in $G$, $V_1 \notin X$ and $V_2 \in X \}$. The children of $X$ is the set $ch_G(X) = \{V_1 | V_1 \la V_2$ is in $G$, $V_1 \notin X$ and $V_2 \in X \}$. The neighbors of $X$ is the set $ne_G(X) = \{V_1 | V_1 - V_2$ is in $G$, $V_1 \notin X$ and $V_2 \in X \}$. The boundary of $X$ is the set $bd_G(X) = ne_G(X) \cup pa_G(X)$. The adjacents of $X$ is the set $ad_G(X) = ne_G(X) \cup pa_G(X) \cup ch_G(X)$. A route between a node $V_{1}$ and a node $V_{n}$ in $G$ is a sequence of (not necessarily distinct) nodes $V_{1}, \ldots, V_{n}$ st $V_i \in ad_G(V_{i+1})$ for all $1 \leq i < n$. If the nodes in the route are all distinct, then the route is called a path. A route is called undirected if $V_i - V_{i+1}$ is in $G$ for all $1 \leq i < n$. A route is called descending if $V_i \ra V_{i+1}$ or $V_i - V_{i+1}$ is in $G$ for all $1 \leq i < n$. A route is called strictly descending if $V_i \ra V_{i+1}$ is in $G$ for all $1 \leq i < n$. The descendants of a set of nodes $X$ of $G$ is the set $de_G(X) = \{V_n |$ there is a descending path from $V_1$ to $V_n$ in $G$, $V_1 \in X$ and $V_n \notin X \}$. The strict ascendants of $X$ is the set $san_G(X) = \{V_1 |$ there is a strictly descending path from $V_1$ to $V_n$ in $G$, $V_1 \notin X$ and $V_n \in X \}$. A route $V_{1}, \ldots, V_{n}$ in $G$ is called a semidirected cycle if $V_n=V_1$, $V_1 \ra V_2$ is in $G$ and $V_i \ra V_{i+1}$ or $V_i - V_{i+1}$ is in $G$ for all $1 < i < n$. A chain graph (CG) is a graph whose every edge is directed or undirected st it has no semidirected cycles. Note that a CG with only directed edges is a directed and acyclic graph (DAG), and a CG with only undirected edges is an undirected graph (UG). A set of nodes of a CG is connected if there exists an undirected path in the CG between every pair of nodes in the set. A connectivity component of a CG is a maximal connected set. We denote by $co_G(X)$ the connectivity component of the CG $G$ to which a node $X$ belongs. A chain $\alpha$ is a partition of $V$ into ordered subsets, which we call blocks. We say that a CG $G$ and a chain $\alpha$ are consistent when (i) for every edge $X \rightarrow Y$ in $G$, the block containing $X$ precedes the block containing $Y$ in $\alpha$, and (ii) for every edge $X - Y$ in $G$, $X$ and $Y$ are in the same block of $\alpha$. Note that the blocks of $\alpha$ and the connectivity components of $G$ may not coincide, but each of the latter must be included in one of the former.

Let $X$, $Y$, $Z$ and $W$ denote four disjoint subsets of $V$. An independence model $M$ is a set of statements of the form $X \ci_M Y | Z$, meaning that $X$ is independent of $Y$ given $Z$. Moreover, $M$ is called graphoid if it satisfies the following properties: Symmetry $X \ci_M Y | Z \Rightarrow Y \ci_M X | Z$, decomposition $X \ci_M Y \cup W | Z \Rightarrow X \ci_M Y | Z$, weak union $X \ci_M Y \cup W | Z \Rightarrow X \ci_M Y | Z \cup W$, contraction $X \ci_M Y | Z \cup W \land X \ci_M$ $W | Z \Rightarrow X \ci_M Y \cup W | Z$, and intersection $X \ci_M Y | Z \cup W \land X \ci_M W | Z \cup Y \Rightarrow X \ci_M$ $Y \cup W | Z$. Moreover, $M$ is called compositional graphoid if it is a graphoid that also satisfies the composition property $X \ci_M Y | Z \land X \ci_M W | Z \Rightarrow X \ci_M Y \cup W | Z$. By convention, $X \ci_M \emptyset | Z$ and $\emptyset \ci_M Y | Z$.

We now recall the semantics of LWF and AMP CGs. A section of a route $\rho$ in a LWF CG is a maximal undirected subroute of $\rho$. A section $V_{2} - \ldots - V_{n-1}$ of $\rho$ is a collider section of $\rho$ if $V_{1} \rightarrow V_{2} - \ldots - V_{n-1} \leftarrow V_{n}$ is a subroute of $\rho$. Moreover, $\rho$ is said to be $Z$-open with $Z \subseteq V$ when (i) every collider section of $\rho$ has a node in $Z$, and (ii) no non-collider section of $\rho$ has a node in $Z$.

A node $B$ in a route $\rho$ in an AMP CG $G$ is called a triplex node in $\rho$ if $A \ra B \la C$, $A \ra B - C$, or $A - B \la C$ is a subroute of $\rho$. Note that maybe $A=C$ in the first case. Note also that $B$ may be both a triplex and a non-triplex node in $\rho$. Moreover, $\rho$ is said to be $Z$-open with $Z \subseteq V$ when (i) every triplex node in $\rho$ is in $Z$, and (ii) every non-triplex node in $\rho$ is outside $Z$.\footnote{See \cite[Remark 3.1]{Levitzetal.2001} for the equivalence of this and the standard definition of $Z$-open route for AMP CGs.}

Let $X$, $Y$ and $Z$ denote three disjoint subsets of $V$. When there is no $Z$-open route in a LWF or AMP CG $G$ between a node in $X$ and a node in $Y$, we say that $X$ is separated from $Y$ given $Z$ in $G$ and denote it as $X \ci_G Y | Z$. The independence model represented by $G$, denoted as $I(G)$, is the set of separations $X \ci_G Y | Z$. In general, $I(G)$ is different depending on whether $G$ is interpreted as a LWF or AMP CG. However, if $G$ is a DAG or UG, then $I(G)$ is the same under the two interpretations. Given a CG $G$ and two disjoint subsets $L$ and $S$ of $V$, we denote by $[I(G)]_L^S$ the independence model represented by $G$ under marginalization of the nodes in $L$ and conditioning on the nodes in $S$. Specifically, $X \ci_G Y | Z$ is in $[I(G)]_L^S$ iff $X \ci_G Y | Z \cup S$ is in $I(G)$ and $X, Y, Z \subseteq V \setminus L \setminus S$.

We say that a CG $G$ includes an independence model $M$ if $I(G) \subseteq M$. Moreover, we say that $G$ is inclusion minimal wrt $M$ if removing any edge from $G$ makes it cease to include $M$. We say that a CG $G_{\alpha}$ is inclusion minimal wrt an independence model $M$ and a chain $\alpha$ if $G_{\alpha}$ is inclusion minimal wrt $M$ and $G_{\alpha}$ is consistent with $\alpha$. We also say that a CG $G$ is inclusion optimal wrt an independence model $M$ if $I(G) \subseteq M$ and there exists no other CG $H$ st $I(G) \subset I(H) \subseteq M$.

Finally, a subgraph of a CG $G$ is a CG whose nodes and edges are all in $G$. The subgraph of a CG $G$ induced by a set of its nodes $X$ is the CG over $X$ that has all and only the edges in $G$ whose both ends are in $X$. A complex in a LWF CG is an induced subgraph of it of the form $V_{1} \rightarrow V_{2} - \ldots - V_{n-1} \leftarrow V_{n}$. A triplex in an AMP CG is an induced subgraph of it of the form $A \ra B \la C$, $A \ra B - C$, or $A - B \la C$.

\section{Justification of LWF and AMP CGs}\label{sec:justification}

The theorem below shows that every LWF or AMP CG $G$ is inclusion optimal wrt the intersection of the independence models represented by some DAGs under conditioning. The DAGs are obtained as follows. First, we decompose $G$ into a DAG $G_D$ and an UG $G_U$, i.e. $G_D$ contains all and only the directed edges in $G$, and $G_U$ contains all and only the undirected edges in $G$. Then, we construct a DAG $G_S$ from $G_U$ by replacing every edge $X - Y$ in $G_U$ with $X \ra S_{XY} \la Y$. The nodes $S_{XY}$ are called selection nodes. Let $S$ denote all the selection nodes in $G_S$. Note that $G_D$ and $G_U$ are defined over the nodes $V$, but $G_S$ is defined over the nodes $V \cup S$.

\begin{theorem}\label{the:justification}
The LWF or AMP CG $G$ is inclusion optimal wrt $I(G_D) \cap [I(G_S)]_\emptyset^{S}$.
\end{theorem}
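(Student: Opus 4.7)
I would verify the two clauses of inclusion optimality separately: first the inclusion $I(G) \subseteq M$ where $M := I(G_D) \cap [I(G_S)]_\emptyset^S$, and then the non-existence of a CG $H$ with $I(G) \subsetneq I(H) \subseteq M$. For $I(G) \subseteq I(G_D)$, I would observe that $G_D$ is obtained from $G$ by deleting the undirected edges, so any d-open path in $G_D$ uses only directed edges; its collider structure in $G_D$ (d-colliders) coincides with the relevant structure in $G$ (singleton collider sections for LWF, triplex nodes for AMP), so the path remains $Z$-open in $G$. For $I(G) \subseteq [I(G_S)]_\emptyset^S$ I would argue contrapositively. Since each selection node is adjacent only to its two $V$-neighbors, any $(Z \cup S)$-open path in $G_S$ between $X, Y \in V$ must alternate as $X = V_0, S_{V_0 V_1}, V_1, \ldots, V_k = Y$; each $S$-node is a collider opened by conditioning on $S$, and each interior $V_i$ appears as $\la V_i \to$, hence is a non-collider lying outside $Z \cup S$, and in particular outside $Z$. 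Contracting each triple $V_i \to S_{V_i V_{i+1}} \la V_{i+1}$ back to $V_i - V_{i+1}$ yields a route in $G_U \subseteq G$ consisting entirely of undirected edges with every interior node outside $Z$; such a route is $Z$-open under LWF semantics (a single non-collider section disjoint from $Z$) and under AMP semantics (no triplex nodes, and every non-triplex interior node outside $Z$).

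For optimality, I would suppose toward a contradiction that $H$ is a CG with $I(G) \subsetneq I(H) \subseteq M$ and derive $I(H) = I(G)$. First I would show that the skeleton of $G$ is contained in that of $H$: if an edge $\{U, V\}$ of $G$ were missing from $H$, then by the pairwise Markov property of CGs some separation $U \ci_H V | W$ would lie in $I(H) \subseteq M$; but either the edge is a directed edge of $G_D$, forcing $U \not\ci_{G_D} V | W$ for every $W$, or $U - V$ gives rise to $U \to S_{UV} \la V$ in $G_S$ whose collider is always opened by conditioning on $S$, forcing $U \not\ci_{G_S} V | W \cup S$ for every $W$; either way $U \ci V | W \notin M$, a contradiction. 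Second, if $H$ has strictly more edges than $G$, monotonicity of LWF/AMP independence under edge addition yields $I(H) \subseteq I(G)$, giving $I(H) = I(G)$; if $H$ shares the skeleton of $G$, I would appeal to the Markov equivalence characterizations (same skeleton plus same complexes for LWF, same skeleton plus same triplexes for AMP) together with the fact that CGs sharing a skeleton but not Markov equivalent have pairwise-incomparable independence models, forcing $I(H) = I(G)$.

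The main obstacle is the same-skeleton subcase of the optimality argument. To tackle it, I would exhibit, for each complex (respectively, triplex) of $G$, a witnessing independence statement whose membership in the independence model is equivalent to the presence of that complex (triplex). Any $H$ with the same skeleton but distinct complexes/triplexes would then either lack some witness statement of $I(G)$ (violating $I(G) \subseteq I(H)$) or introduce an independence statement absent from $M$ (violating $I(H) \subseteq M$), yielding the required contradiction.
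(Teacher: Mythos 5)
Your overall strategy for the optimality clause coincides with the paper's: show that any competitor $H$ with $I(G) \subset I(H) \subseteq I(G_D) \cap [I(G_S)]_\emptyset^{S}$ must share $G$'s adjacencies, then its complexes (LWF) or triplexes (AMP), and then invoke the Markov equivalence characterizations (Frydenberg's Theorem 5.6 for LWF; Andersson et al.\ and Levitz et al.\ for AMP) to contradict the strictness of the inclusion. Your argument that no edge of $G$ can be missing from $H$ --- via the non-separability of adjacent nodes in $G_D$, and via the collider $U \ra S_{UV} \la V$ in $G_S$ that conditioning on $S$ always opens --- is exactly the paper's ``second case.'' You also do something the paper does not: you prove the first clause of inclusion optimality, $I(G) \subseteq I(G_D) \cap [I(G_S)]_\emptyset^{S}$, by lifting open routes of $G_D$ directly into $G$ and by contracting the alternating $(Z \cup S)$-open paths of $G_S$ into undirected $Z$-open routes of $G$. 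The paper's proof addresses only the second clause, so this part of your argument is a genuine and correct addition.

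Two points need repair. First, your handling of the case where $H$ has strictly more adjacencies than $G$ is wrong as stated: ``monotonicity under edge addition'' yields $I(H) \subseteq I(G)$ only when $H$ is a supergraph of $G$ that preserves edge types, whereas your first step gives only containment of skeletons, leaving the orientations of the shared edges in $H$ unconstrained; for instance, $A \ra B \la C$ and $A \ra B \ra C$ have the same skeleton and incomparable independence models. The correct (and simpler) argument, which is the paper's ``first case,'' is that if $X$ and $Y$ are adjacent in $H$ but not in $G$, then $X \ci_G Y | Z$ holds for some $Z$ while no conditioning set separates adjacent nodes in $H$, directly contradicting $I(G) \subset I(H)$. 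Second, in the same-skeleton case, the ``witness'' programme you defer to the end is precisely where care is needed: for a complex or triplex present in $H$ but absent from $G$, the natural witness lies in $I(H) \setminus I(G)$, which by itself contradicts neither $I(G) \subseteq I(H)$ nor, without further argument, $I(H) \subseteq I(G_D) \cap [I(G_S)]_\emptyset^{S}$. What is actually needed (and what the paper implicitly uses before citing the equivalence theorems) is that two same-skeleton CGs with differing complexes or triplexes always admit a separation in $I(G) \setminus I(H)$; this incomparability is true but must be argued or cited rather than left as a one-sided witness construction.
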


\begin{proof}
First, assume that $G$ is a LWF CG. Assume to the contrary that there exists a LWF CG $H$ st $I(G) \subset I(H) \subseteq I(G_D) \cap [I(G_S)]_\emptyset^{S}$. Note that $G$ and $H$ must have the same adjacencies because, otherwise, there are two nodes $X, Y \in V$ that are adjacent in $H$ but not in $G$, or vice versa. The first case implies that $X \ci_G Y | Z$ holds but $X \ci_H Y | Z$ does not hold for some $Z \subseteq V \setminus X \setminus Y$, which contradicts that $I(G) \subset I(H)$. The second case implies that $X \ci_{G_D} Y | Z$ or $X \ci_{G_S} Y | Z \cup S$ does not hold for any $Z \subseteq V \setminus X \setminus Y$. Then, $X \ci Y | Z$ is in $I(H)$ but not in $I(G_D) \cap [I(G_S)]_\emptyset^{S}$, which contradicts that $I(H) \subseteq I(G_D) \cap [I(G_S)]_\emptyset^{S}$. Moreover, if $G$ and $H$ have the same adjacencies, then they must also have the same complexes because, otherwise, there are two nodes $X, Y \in V$ st $X \ci_G Y | Z$ holds but $X \ci_H Y | Z$ does not hold for some $Z \subseteq V \setminus X \setminus Y$, which contradicts that $I(G) \subset I(H)$. However, that $G$ and $H$ have the same adjacencies and complexes contradicts that $I(G) \subset I(H)$ \cite[Theorem 5.6]{Frydenberg1990}.

Now, assume that $G$ is an AMP CG. Assume to the contrary that there exists an AMP CG $H$ st $I(G) \subset I(H) \subseteq I(G_D) \cap [I(G_S)]_\emptyset^{S}$. Note that $G$ and $H$ must have the same adjacencies, by a reasoning similar to the one used above for LWF CGs. Then, they must also have the same triplexes because, otherwise, there are two nodes $X, Y \in V$ st $X \ci_G Y | Z$ holds but $X \ci_H Y | Z$ does not hold for some $Z \subseteq V \setminus X \setminus Y$, which contradicts that $I(G) \subset I(H)$. However, that $G$ and $H$ have the same adjacencies and triplexes contradicts that $I(G) \subset I(H)$ as shown in \cite[Theorem 5]{Anderssonetal.2001} and \cite[Theorem 6.1]{Levitzetal.2001}.
\end{proof}

Unfortunately, the LWF or AMP CG $G$ may not be faithful to $I(G_D) \cap [I(G_S)]_\emptyset^{S}$. To see it, let $G$ be $A \ra B - C \la D$. Then, $A \ci D | B \cup C$ is in $I(G_D) \cap [I(G_S)]_\emptyset^{S}$ but not in $I(G)$. We doubt that one can prove (and so strengthen our justification) that every LWF or AMP CG is faithful to the intersection of the independence models represented by some DAGs under conditioning. However, it is true that the decomposition of $G$ into $G_D$ and $G_U$ is not the only one that allows us to prove that $G$ is inclusion optimal wrt to the intersection of the independence models represented by some DAGs under conditioning. For instance, we can also prove this result if $G$ is decomposed into a set of DAGs and UGs st none of them has more than one edge, or if $G$ is decomposed into a set of CGs st none of them has a subgraph of the form $A \ra B - C$. We omit the proofs. In any case, this does not change the main message of this work, namely that LWF and AMP CGs can be justified on the sole basis of causal models. Having said this, we prefer the original decomposition because it is not completely arbitrary: $G_D$ represents the relationships in $G$ that are causal, and $G_U$ those that are non-causal and need to be explained through conditioning.

Finally, note that the LWF or AMP CG $G$ may not be the only inclusion optimal CG wrt $I(G_D) \cap [I(G_S)]_\emptyset^{S}$. To see it, let $G$ be $A \ra B - C \la D$. Then, any LWF or AMP CG that has the same adjacencies as $G$ is inclusion optimal wrt $I(G_D) \cap [I(G_S)]_\emptyset^{S}$. Some of these other inclusion optimal CGs may even be preferred instead of $G$ according to some criteria (e.g. number of independencies represented, or number of directed and/or undirected edges). However, $G$ is preferred according to an important criterion: It is the only one that has all and only the strictly ascendant relationships (i.e. direct and indirect causal relationships) between two nodes in $V$ that exist in $G_D$ and $G_S$.

\section{Combining Arbitrary DAGs into a LWF or AMP CG}\label{sec:combination}

In this section, we study the opposite of the problem above. Specifically, let $G_1, \ldots, G_r$ denote $r$ arbitrary DAGs, where any $G_i$ is defined over the nodes $V \cup L_i \cup S_i$ and it is subject to marginalization of the nodes in $L_i$ and conditioning on the nodes in $S_i$. We would like to find a LWF or AMP CG that is inclusion optimal wrt $\bigcap_{i=1}^r [I(G_i)]_{L_i}^{S_i}$. However, this seems to be an extremely hard problem. So, we study a simpler version of it in which we are only interested in those CGs that are consistent with a chain $\alpha$. Then, our goal becomes to find an inclusion minimal LWF or AMP CG wrt $\bigcap_{i=1}^r [I(G_i)]_{L_i}^{S_i}$ and $\alpha$. The prior knowledge of $\alpha$ represents our a priori knowledge on which nodes may be causally related and which nodes may be non-causally related. The latter determine the blocks of $\alpha$, and the former the ordering of the blocks in $\alpha$. The theorems below solve our problem. Specifically, they give a constructive characterization of the unique LWF (respectively AMP) CG that is inclusion minimal wrt a graphoid (respectively compositional graphoid) and a chain. Note that any $I(G_i)$ is a compositional graphoid \cite[Theorem 1]{SadeghiandLauritzen2012}. Moreover, it is easy to verify that any $[I(G_i)]_{L_i}^{S_i}$ is also a compositional graphoid and, thus, $\bigcap_{i=1}^r [I(G_i)]_{L_i}^{S_i}$ is also a compositional graphoid. Thus, the theorems below apply to our problem.

\begin{theorem}\label{the:uniquelwf}
Let $M$ denote an independence model, and $\alpha$ a chain with blocks $b_1, \ldots, b_n$. If $M$ is a graphoid, then there exits a unique LWF CG $G_{\alpha}$ that is inclusion minimal wrt $M$ and $\alpha$. Specifically, for each node $X$ of each block $b_i$ of $\alpha$, $bd_{G_{\alpha}}(X)$ is the smallest subset of $\bigcup_{j=1}^i b_j \setminus X$ st $X \ci_M \bigcup_{j=1}^i b_j \setminus X \setminus bd_{G_{\alpha}}(X) | bd_{G_{\alpha}}(X)$.
\end{theorem}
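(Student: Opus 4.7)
The plan is to define $G_\alpha$ by prescribing, for each $X \in b_i$ with $U_i = \bigcup_{j \leq i} b_j$, the boundary $bd_{G_\alpha}(X)$ as the smallest $B \subseteq U_i \setminus X$ with $X \ci_M U_i \setminus X \setminus B | B$; edges will then run between $X$ and each member of $bd_{G_\alpha}(X)$, directed from the earlier to the later block or left undirected when both endpoints share a block. Four things need to be verified in turn: (i) the smallest such $B$ exists and is unique; (ii) the within-block adjacency declared by $bd$ is symmetric so that the undirected edges are consistent and no semidirected cycles arise; (iii) $I(G_\alpha) \subseteq M$; and (iv) $G_\alpha$ is inclusion minimal wrt $M$ and $\alpha$, and is the unique such CG.

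For (i), if $B_1, B_2 \subseteq U_i \setminus X$ both achieve the independence, I would partition $U_i \setminus X$ into $C = B_1 \cap B_2$, $D_1 = B_1 \setminus B_2$, $D_2 = B_2 \setminus B_1$ and $E = U_i \setminus X \setminus (B_1 \cup B_2)$. Weak union on each premise extracts $X \ci_M D_2 | C \cup D_1 \cup E$ and $X \ci_M D_1 | C \cup D_2 \cup E$, from which intersection gives $X \ci_M D_1 \cup D_2 | C \cup E$; combining this with $X \ci_M E | C \cup D_1 \cup D_2$ (another weak-union consequence of either premise) by a second intersection yields $X \ci_M U_i \setminus X \setminus C | C$, establishing the claim for $C$. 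Hence the smallest set exists and is unique.

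For (ii), suppose $X, Y \in b_i$ satisfy $Y \in bd_{G_\alpha}(X)$ but $X \notin bd_{G_\alpha}(Y)$. The independence defining $bd_{G_\alpha}(Y)$ places $X$ on the left side, so weak union followed by decomposition and symmetry gives $X \ci_M Y | U_i \setminus X \setminus Y$; combining this with $X \ci_M U_i \setminus X \setminus bd_{G_\alpha}(X) | bd_{G_\alpha}(X)$ by intersection, taking $bd_{G_\alpha}(X) \setminus Y$ as the conditioning baseline, produces a strictly smaller witness $bd_{G_\alpha}(X) \setminus Y$ for $X$ and contradicts minimality. Absence of semidirected cycles is then automatic, because directed edges cross blocks only in the $\alpha$-order while undirected ones stay within a block.

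For (iii) and (iv), $G_\alpha$ satisfies the LWF local Markov property wrt $\alpha$ in $M$ by construction, so Frydenberg's equivalence of the local and global Markov properties under graphoid axioms yields $I(G_\alpha) \subseteq M$. Deleting an edge from $G_\alpha$ shrinks some boundary by one element $Y$, and the resulting CG's local Markov statement at $X$ would, if it held in $M$, contradict the minimality of $bd_{G_\alpha}(X)$. For uniqueness, any inclusion-minimal $H$ consistent with $\alpha$ satisfies its own local Markov in $M$, which combined with the defining minimality of $bd_{G_\alpha}(X)$ forces $bd_{G_\alpha}(X) \subseteq bd_H(X)$; hence $G_\alpha$ is an edge-subgraph of $H$, and strict inclusion would contradict the inclusion-minimality of $H$ (deleting any extra edge still leaves $G_\alpha$ as a subgraph, forcing the resulting independence model into $I(G_\alpha) \subseteq M$). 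The main obstacle I foresee is step (ii), where two nodes in the same block must agree on whether they are neighbors; this is the chain-graph phenomenon absent from the DAG case and is precisely where weak union, decomposition, and intersection are all indispensable. Frydenberg's local-to-global Markov equivalence for LWF CGs is a secondary non-trivial ingredient that I would invoke as a black box.
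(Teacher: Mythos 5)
Your proof is correct, but it necessarily takes a different route from the paper, whose entire ``proof'' of this theorem is a citation to Lemma 1 of Pe\~{n}a, Sonntag and Nielsen (2014). What you have reconstructed is essentially that lemma's argument, and it closely mirrors the structure the paper itself uses for the AMP analogue (Theorem 3): local independencies hold by construction, a local-to-global Markov equivalence (Frydenberg's, in the LWF case) is invoked as a black box to get $I(G_{\alpha}) \subseteq M$, and the intersection property drives both the existence of a unique smallest boundary and the uniqueness of the inclusion-minimal CG. Your step (ii) --- that two nodes in the same block agree on whether they are neighbours --- is a genuine obligation that the one-line citation hides, and your weak-union/intersection argument for it is right. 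Two small points deserve a line each. First, the local property you obtain is the one relative to the chain, $X \ci_M \bigcup_{j=1}^i b_j \setminus X \setminus bd_{G_{\alpha}}(X) \,|\, bd_{G_{\alpha}}(X)$, not the non-descendant-based local property; you should cite the version of Frydenberg's theorem that equates the local $\alpha$-Markov property with the global one under the intersection axiom (this is what he proves, so the invocation is legitimate, but the two local properties are not literally the same statement). Second, your uniqueness argument uses, without justification, that if $G_{\alpha}$ is an edge-subgraph of $H'$ (both consistent with $\alpha$) then $I(H') \subseteq I(G_{\alpha})$; this is true and, with the route-based separation criterion the paper adopts, follows in one sentence (a $Z$-open route in a subgraph is verbatim a $Z$-open route in the supergraph, since sections and collider sections depend only on the edges along the route), but it should be said. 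What your approach buys is self-containedness and transparency about where each graphoid axiom is used; what the paper's approach buys is brevity at the cost of opacity.
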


\begin{proof}
The theorem has been proven by \cite[Lemma 1]{Pennaetal.2014}.
\end{proof}

\begin{theorem}\label{the:uniqueamp}
Let $M$ denote an independence model, and $\alpha$ a chain with blocks $b_1, \ldots, b_n$. If $M$ is a compositional graphoid, then there exits a unique AMP CG $G_{\alpha}$ that is inclusion minimal wrt $M$ and $\alpha$. Specifically, consider the blocks in $\alpha$ in reverse order and perform the following two steps for each of them. First, for each node $X$ of the block $b_i$, $ne_{G_{\alpha}}(X)$ is the smallest subset of $b_i \setminus X$ st $X \ci_M b_i \setminus X \setminus ne_{G_{\alpha}}(X) | \bigcup_{j=1}^{i-1} b_j \cup ne_{G_{\alpha}}(X)$. Second, for each node $X$ of the block $b_i$, $pa_{G_{\alpha}}(X)$ is the smallest subset of $\bigcup_{j=1}^{i-1} b_j$ st $X \ci_M V \setminus X \setminus de_{G_{\alpha}}(X) \setminus pa_{G_{\alpha}}(X) | pa_{G_{\alpha}}(X)$.\footnote{Note that $de_{G_{\alpha}}(X)$ for any $X \in b_i$ is known when the second step for $b_i$ starts, because $ne_{G_{\alpha}}(X)$ for any $X \in \bigcup_{j=i}^{n} b_j$ and $pa_{G_{\alpha}}(X)$ for any $X \in \bigcup_{j=i+1}^{n} b_j$ have already been identified.}
\end{theorem}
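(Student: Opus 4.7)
The plan is to prove the statement in four steps: well-definedness of the construction, inclusion $I(G_\alpha) \subseteq M$, inclusion minimality, and uniqueness. The structure parallels the LWF case of Theorem \ref{the:uniquelwf} but must be adapted to the ordered local AMP Markov property, which is why blocks are processed in reverse and why the parent step is separated from the neighbor step.

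For well-definedness, I would first show that the family of subsets $N \subseteq b_i \setminus X$ satisfying $X \ci_M (b_i \setminus X) \setminus N | \bigcup_{j<i} b_j \cup N$ is closed under intersection, which guarantees that a unique minimum exists. Given $N_1, N_2$ in the family, partition $b_i \setminus X$ as $N \cup D_1 \cup D_2 \cup R$ with $N = N_1 \cap N_2$, $D_k = N_k \setminus N$, and $R$ the remainder. Decomposition and the intersection axiom applied to the two given statements yield $X \ci_M D_1 \cup D_2 | \bigcup_{j<i} b_j \cup N$; weak union and contraction then absorb $R$, producing the condition for $N$. The analogous closure for the parent step uses only the graphoid axioms as well. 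The footnote's claim that $de_{G_\alpha}(X)$ is already determined when the parent step processes $X \in b_i$ follows from consistency with $\alpha$: every descending path from $X$ remains inside $b_i \cup \bigcup_{j>i} b_j$, whose edges have been fixed in earlier iterations. The same consistency shows $G_\alpha$ is free of semidirected cycles and hence a valid AMP CG.

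For the inclusion $I(G_\alpha) \subseteq M$, note that the construction forces, for every $X \in b_i$, both $X \ci_M b_i \setminus X \setminus ne_{G_\alpha}(X) | \bigcup_{j<i} b_j \cup ne_{G_\alpha}(X)$ and $X \ci_M V \setminus X \setminus de_{G_\alpha}(X) \setminus pa_{G_\alpha}(X) | pa_{G_\alpha}(X)$ to hold in $M$. These are precisely the two clauses of the ordered local AMP Markov property, which for a compositional graphoid is equivalent to the global AMP Markov property \cite{Anderssonetal.2001,Levitzetal.2001}; the composition axiom enters here. Inclusion minimality then follows by case analysis on the removed edge: deleting an undirected $Y - X$ shrinks $ne_{G_\alpha}(X)$ and the corresponding new local Markov separation of the resulting $G'$ would, if it held in $M$, exhibit a strictly smaller subset satisfying the neighbor condition, contradicting the construction; deleting a directed $Y \ra X$ leaves $de_{G_\alpha}(X)$ unchanged and gives the symmetric contradiction against the minimality of $pa_{G_\alpha}(X)$.

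Uniqueness is the main obstacle. Let $H$ be any AMP CG inclusion minimal wrt $M$ and $\alpha$; I would show $H = G_\alpha$ by reverse induction on the block index. Since $I(H) \subseteq M$, the ordered local Markov statements of $H$ transfer to $M$, so the unique-minimum result from the first step gives $ne_{G_\alpha}(X) \subseteq ne_H(X)$, and the inductive hypothesis (that $H$ and $G_\alpha$ agree on blocks $b_{i+1}, \ldots, b_n$) lets us equate descendant sets past $b_i$ and conclude $pa_{G_\alpha}(X) \subseteq pa_H(X)$. The reverse inclusions are the hard part: assuming $Y \in ne_H(X) \setminus ne_{G_\alpha}(X)$, I would delete $Y - X$ from $H$ to form $H'$ and argue $I(H') \subseteq M$ to contradict inclusion minimality of $H$. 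The new local Markov statements at $X$ and $Y$ follow by weak union from the $G_\alpha$-minimum separations, but the parent conditions at a third node $Z$ are delicate because $de_{H'}(Z)$ can be strictly smaller than $de_H(Z)$, strengthening the statement that must be verified in $M$. This is the principal technical obstacle; my plan is to resolve it by combining the descendant-tracking observation that every $W \in de_H(Z) \setminus de_{H'}(Z)$ lies on a descending path from $Z$ that is forced through the removed edge, with the parent and neighbor separations of $M$ already supplied by $I(H) \subseteq M$, assembled via the compositional graphoid axioms. The symmetric argument for a hypothetically removed directed edge closes the induction.
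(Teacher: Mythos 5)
Your skeleton (well-definedness of the minima via closure under the intersection axiom, inclusion via a local-to-global Markov equivalence, minimality from the minimum property, uniqueness by comparing boundaries) is sound, and your closure computation together with the ``easy'' half of uniqueness --- deriving $ne_{G_{\alpha}}(X) \subseteq ne_{H}(X)$ and $pa_{G_{\alpha}}(X) \subseteq pa_{H}(X)$ by applying the intersection axiom to the two competing separations, both of which lie in $M$ because both graphs include $M$ --- is exactly the paper's uniqueness argument. However, the step you yourself flag as ``the principal technical obstacle'' is a genuine gap, and it is self-inflicted. You try to show that deleting a surplus edge from $H$ yields an $H'$ still including $M$ by re-verifying the local Markov conditions of $H'$ inside $M$, and you rightly notice that the parent condition at a third node $Z$ becomes strictly \emph{stronger} when $de_{H'}(Z) \subsetneq de_{H}(Z)$; your proposed descendant-tracking repair is only a plan, and it is not clear the compositional graphoid axioms let you adjoin $de_{H}(Z) \setminus de_{H'}(Z)$ to the relevant independency. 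None of this is needed. First run the intersection-axiom argument over \emph{all} blocks to conclude that $G_{\alpha}$ is a subgraph of $H$. AMP separation is antitone in the edge set: a route in a subgraph is a route in the supergraph with the same triplex nodes, so every $Z$-open route of the subgraph is $Z$-open in the supergraph and hence $I(H) \subseteq I(H-e) \subseteq I(G_{\alpha}) \subseteq M$ for any edge $e$ of $H$ absent from $G_{\alpha}$, contradicting inclusion minimality of $H$. The paper is even more direct: it applies the intersection axiom to $ne_{G_{\alpha}}(X)$ and $ne_{H_{\alpha}}(X)$ (resp.\ the parent sets, taking the largest block index where they differ so that descendant sets agree), concludes the boundaries coincide, and hence that the two graphs have the same edges.

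A second, lesser gap is in your inclusion step. The blocks of $\alpha$ need not coincide with the connectivity components of $G_{\alpha}$ (they may be strictly coarser), so the two construction conditions are \emph{not} literally ``the two clauses of the ordered local AMP Markov property'' of $G_{\alpha}$; the bulk of the paper's proof consists precisely in converting the block-based conditions into component-based pairwise conditions by a chain of weak union, composition, contraction and decomposition steps. Moreover, the local-to-global equivalence you cite from Andersson et al.\ and Levitz et al.\ is established there for probability distributions, not for abstract compositional graphoids; the paper instead invokes Theorems 5 and 6 of Pe\~{n}a (2014), which hold for general independence models, and relegates the Andersson et al.\ route to a footnote noting it would require adaptation. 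As written, your step 2 cites a statement that is not quite the one you need.
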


\begin{proof}
Consider any $X \in V$. Assume that $X \in b_i$. By construction, we have that

$Y \ci_M V \setminus Y \setminus de_{G_{\alpha}}(Y) \setminus pa_{G_{\alpha}}(Y) | pa_{G_{\alpha}}(Y)$ 

for any $Y \in X \cup ne_{G_{\alpha}}(X)$. Then,

$Y \ci_M V \setminus Y \setminus de_{G_{\alpha}}(Y) \setminus pa_{G_{\alpha}}(X \cup ne_{G_{\alpha}}(X)) | pa_{G_{\alpha}}(X \cup ne_{G_{\alpha}}(X))$ 

for any $Y \in X \cup ne_{G_{\alpha}}(X)$ by weak union. Then,

$X \cup ne_{G_{\alpha}}(X) \ci_M V \setminus X \setminus de_{G_{\alpha}}(X) \setminus pa_{G_{\alpha}}(X \cup ne_{G_{\alpha}}(X)) | pa_{G_{\alpha}}(X \cup ne_{G_{\alpha}}(X))$ 

by repeated application of symmetry and composition. Then,

$X \ci_M V \setminus X \setminus de_{G_{\alpha}}(X) \setminus pa_{G_{\alpha}}(X \cup ne_{G_{\alpha}}(X)) | pa_{G_{\alpha}}(X \cup ne_{G_{\alpha}}(X)) \cup ne_{G_{\alpha}}(X)$ 

by symmetry and weak union. Then,

$X \ci_M \bigcup_{j=1}^{i-1} b_j \setminus pa_{G_{\alpha}}(X \cup ne_{G_{\alpha}}(X)) | pa_{G_{\alpha}}(X \cup ne_{G_{\alpha}}(X)) \cup ne_{G_{\alpha}}(X)$

by decomposition. This together with

$X \ci_M b_i \setminus X \setminus ne_{G_{\alpha}}(X) | \bigcup_{j=1}^{i-1} b_j \cup ne_{G_{\alpha}}(X)$

which follows by construction, imply that

$X \ci_M co_{G_{\alpha}}(X) \setminus X \setminus ne_{G_{\alpha}}(X) | pa_{G_{\alpha}}(X \cup ne_{G_{\alpha}}(X)) \cup ne_{G_{\alpha}}(X)$

by contraction and decomposition. This together with

$X \ci_M V \setminus X \setminus de_{G_{\alpha}}(X) \setminus pa_{G_{\alpha}}(X) | pa_{G_{\alpha}}(X)$

which follows by construction, imply by decomposition that

$X \ci_M Y | pa_{G_{\alpha}}(X \cup ne_{G_{\alpha}}(X)) \cup ne_{G_{\alpha}}(X)$

for any $Y \in co_{G_{\alpha}}(X) \setminus X \setminus ne_{G_{\alpha}}(X)$, and

$X \ci_M Y | pa_{G_{\alpha}}(X)$

for any $Y \in V \setminus X \setminus de_{G_{\alpha}}(X) \setminus pa_{G_{\alpha}}(X)$. These independencies plus those that can be derived from them by applying the compositional graphoid properties are exactly the independencies in $I(G_{\alpha})$ \cite[Theorems 5 and 6]{Penna2014}.\footnote{Theorems 5 and 6 in the work of \cite{Penna2014} are stated for so-called marginal AMP CGs. However, they also apply to AMP CGs because these are marginal AMP CGs without bidirected edges.} This implies that $G_{\alpha}$ includes $M$.\footnote{This result may also be derived by adapting to general independence models the results reported by \cite[Section 4]{Anderssonetal.2001} for probability distributions.} In fact, $G_{\alpha}$ is inclusion minimal wrt $M$ and $\alpha$ by construction of $ne_{G_{\alpha}}(X)$ and $pa_{G_{\alpha}}(X)$.

Assume to the contrary that there exists another AMP CG $H_{\alpha}$ that is inclusion minimal wrt $M$ and $\alpha$. Let $X \in V$ denote any node st $ne_{G_{\alpha}}(X) \neq ne_{H_{\alpha}}(X)$. Assume that $X \in b_i$. Then, 

$X \ci_M b_i \setminus X \setminus ne_{G_{\alpha}}(X) | \bigcup_{j=1}^{i-1} b_j \cup ne_{G_{\alpha}}(X)$

and

$X \ci_M b_i \setminus X \setminus ne_{H_{\alpha}}(X) | \bigcup_{j=1}^{i-1} b_j \cup ne_{H_{\alpha}}(X)$

because $G_{\alpha}$ and $H_{\alpha}$ include $M$. Then, 

$X \ci_M b_i \setminus X \setminus [ ne_{G_{\alpha}}(X) \cap ne_{H_{\alpha}}(X) ] | \bigcup_{j=1}^{i-1} b_j \cup [ ne_{G_{\alpha}}(X) \cap ne_{H_{\alpha}}(X) ]$

by intersection. However, this contradicts the definition of $ne_{G_{\alpha}}(X)$, because $ne_{G_{\alpha}}(X) \cap ne_{H_{\alpha}}(X)$ is smaller than $ne_{G_{\alpha}}(X)$. Consequently, $ne_{G_{\alpha}}(X) = ne_{H_{\alpha}}(X)$ for any $X \in V$.

Let $i$ denote the largest block index st there is some $X \in b_i$ st $pa_{G_{\alpha}}(X) \neq pa_{H_{\alpha}}(X)$. Note that $de_{G_{\alpha}}(X)=de_{H_{\alpha}}(X)$, because $pa_{G_{\alpha}}(Y) = pa_{H_{\alpha}}(Y)$ for any $Y \in \bigcup_{j=i+1}^{n} b_j$ and, as proven above, $ne_{G_{\alpha}}(Y) = ne_{H_{\alpha}}(Y)$ for any $Y \in V$. Then, 

$X \ci_M V \setminus X \setminus de_{G_{\alpha}}(X) \setminus pa_{G_{\alpha}}(X) | pa_{G_{\alpha}}(X)$ 

and 

$X \ci_M V \setminus X \setminus de_{H_{\alpha}}(X) \setminus pa_{H_{\alpha}}(X) | pa_{H_{\alpha}}(X)$ 

because $G_{\alpha}$ and $H_{\alpha}$ include $M$. Then, 

$X \ci_M V \setminus X \setminus de_{G_{\alpha}}(X) \setminus [ pa_{G_{\alpha}}(X) \cap pa_{H_{\alpha}}(X) ] | [ pa_{G_{\alpha}}(X) \cap pa_{H_{\alpha}}(X) ]$ 

by intersection. However, this contradicts the definition of $pa_{G_{\alpha}}(X)$, because $pa_{G_{\alpha}}(X) \cap pa_{H_{\alpha}}(X)$ is smaller than $pa_{G_{\alpha}}(X)$. Consequently, $pa_{G_{\alpha}}(X) = pa_{H_{\alpha}}(X)$ for any $X \in V$. Therefore, $G_{\alpha}$ and $H_{\alpha}$ have the same edges, which is a contradiction.
\end{proof}

\section{Discussion}\label{sec:discussion}

The purpose of this paper has been to justify LWF and AMP CGs by showing that they do not represent arbitrary independence models. Unlike previous justifications, ours builds solely on causal models and does not involve equilibrium distributions or deterministic nodes, which may seem odd to some readers. Specifically, for any given LWF or AMP CG, we have imagined a system that switches between different regimes or configurations, and we have shown that the given CG represents the different regimes jointly. To do so, we have assumed that each of the regimes can be represented by a causal model. We have also assumed that the causal models may be subject to selection bias. In other words, we have assumed that each of the regimes can be represented by a DAG under conditioning.

In this paper, we have also studied the opposite of the problem above, namely how to combine a set of arbitrary DAGs under marginalization and conditioning into a consensus LWF or AMP CG. We have shown how to do it optimally when the consensus CG must be consistent with a given chain. The chain may represent our prior knowledge about the causal and non-causal relationships in the domain at hand. In the future, we would like to drop this requirement. We would also like to find parameter values for the consensus CG by combining the parameter values associated to the given DAGs.

\subsubsection*{Acknowledgments.}

This work is funded by the Center for Industrial Information Technology (CENIIT) and a so-called career contract at Link\"oping University, and by the Swedish Research Council (ref. 2010-4808).


\begin{thebibliography}{}

\bibitem{Anderssonetal.2001}
Andersson, S. A., Madigan, D. and Perlman, M. D. Alternative Markov Properties for Chain Graphs. {\em Scandinavian Journal of Statistics}, 28:33-85, 2001.

\bibitem{BendtsenandPenna2013}
Bendtsen, M. and Pe\~{n}a, J. M. Gated Bayesian Networks. In {\em Proceedings of the 12th Scandinavian Conference on Artificial Intelligence}, 35-44, 2013. 

\bibitem{BendtsenandPenna2014}
Bendtsen, M. and Pe\~{n}a, J. M. Learning Gated Bayesian Networks for Algorithmic Trading. In {\em Proceedings of the 7th European Workshop on Probabilistic Graphical Models}, accepted, 2014. 

\bibitem{Bonduelle1987}
Bonduelle, Y. Aggregating Expert Opinions by Resolving Sources of Disagreement. PhD Thesis, Stanford University, 1987.

\bibitem{ChickeringandMeek2002}
Chickering, D. M. and Meek, C. Finding Optimal Bayesian Networks. In {\em Proceedings of the 18th Conference on Uncertainty in Artificial Intelligence}, 94-102, 2002.

\bibitem{CoxandWermuth1993}
Cox, D. R. and Wermuth, N. Linear Dependencies Represented by Chain Graphs. {\em Statistical Science}, 8:204-218, 1993.

\bibitem{CoxandWermuth1996}
Cox, D. R. and Wermuth, N. {\em Multivariate Dependencies - Models, Analysis and Interpretation}. Chapman \& Hall, 1996.

\bibitem{delSagradoandMoral2003}
del Sagrado, J. and Moral, S. Qualitative Combination of Bayesian Networks. {\em International Journal of Intelligent Systems}, 18:237-249, 2003.

\bibitem{Drton2009}
Drton, M. Discrete Chain Graph Models. {\em Bernoulli}, 15:736-753, 2009.

\bibitem{Etminanietal.2013}
Etminani, K., Naghibzadeh, M. and Pe\~{n}a, J. M. DemocraticOP: A Democratic Way of Aggregating Bayesian Network Parameters. {\em International Journal of Approximate Reasoning}, 54:602-614, 2013.

\bibitem{Ferrandizetal.2005}
Ferrandiz, J., Castillo, E. and Sanmartin, P. Temporal Aggregation in Chain Graph Models. {\em Journal of Statistical Planning and Inference}, 133:69-93, 2005.

\bibitem{Frydenberg1990}
Frydenberg, M. The Chain Graph Markov Property. {\em Scandinavian Journal of Statistics}, 17:333-353 1990. 

\bibitem{Laddaga1977}
Laddaga, R. Lehrer and the Consensus Proposal. {\em Synthese}, 36:473-477, 1977.

\bibitem{Lauritzen1996}
Lauritzen, S. L. {\em Graphical Models}. Oxford University Press, 1996.

\bibitem{LauritzenandRichardson2002}
Lauritzen, S. L. and Richardson, T. S. Chain Graph Models and Their Causal Interpretations. {\em Journal of the Royal Statistical Society B}, 64:321-361, 2002.

\bibitem{LauritzenandWermuth1989}
Lauritzen, S. L. and Wermuth, N. Graphical Models for Associations between Variables, some of which are Qualitative and some Quantitative. {\em Annual of Statistics}, 17:31-57, 1989.

\bibitem{Levitzetal.2001}
Levitz, M., Perlman M. D. and Madigan, D. Separation and Completeness Properties for AMP Chain Graph Markov Models. {\em The Annals of Statistics}, 29:1751-1784, 2001.

\bibitem{MatzkevichandAbramson1992}
Matzkevich, I. and Abramson, B. The Topological Fusion of Bayes Nets. In {\em Proceedings of the 8th Conference on Uncertainty in Artificial Intelligence}, 191-198, 1992.

\bibitem{MatzkevichandAbramson1993a}
Matzkevich, I. and Abramson, B. Some Complexity Considerations in the Combination of Belief Networks. In {\em Proceedings of the 9th Conference on Uncertainty in Artificial Intelligence}, 152-158, 1993.

\bibitem{MatzkevichandAbramson1993b}
Matzkevich, I. and Abramson, B. Deriving a Minimal I-Map of a Belief Network Relative to a Target Ordering of its Nodes. In {\em Proceedings of the 9th Conference on Uncertainty in Artificial Intelligence}, 159-165, 1993.

\bibitem{Maynard-ReidIIandChajewska2001}
Maynard-Reid II, P. and Chajewska, U. Agregating Learned Probabilistic Beliefs. In {\em Proceedings of the 17th Conference in Uncertainty in Artificial Intelligence}, 354-361, 2001.

\bibitem{Nielsenetal.2003}
Nielsen, J. D., Ko\v{c}ka, T. and Pe\~{n}a, J. M. On Local Optima in Learning Bayesian Networks. In {\em Proceedings of the 19th Conference on Uncertainty in Artificial Intelligence}, 435-442, 2003.

\bibitem{NielsenandParsons2007}
Nielsen, S. H. and Parsons, S. An Application of Formal Argumentation: Fusing Bayesian Networks in Multi-Agent Systems. {\em Artificial Intelligence} 171:754-775, 2007.

\bibitem{Pearl2000}
Pearl, J. {\em Causality: Models, Reasoning, and Inference}. Cambridge University Press, 2000.

\bibitem{PennockandWellman2005}
Pennock, D. M. and Wellman, M. P. Graphical Models for Groups: Belief Aggregation and Risk Sharing. {\em Decision Analysis}, 2:148-164, 2005.

\bibitem{Penna2007}
Pe\~{n}a, J. M. Approximate Counting of Graphical Models Via MCMC. In {\em Proceedings of the 11th International Conference on Artificial Intelligence and Statistics}, 352-359, 2007.

\bibitem{Penna2009}
Pe\~{n}a, J. M. Faithfulness in Chain Graphs: The Discrete Case. {\em International Journal of Approximate Reasoning}, 50:1306-1313, 2009. 

\bibitem{Penna2011a}
Pe\~{n}a, J. M. Faithfulness in Chain Graphs: The Gaussian Case. In {\em Proceedings of the 14th International Conference on Artificial Intelligence and Statistics}, 588-599, 2011.

\bibitem{Penna2011b}
Pe\~{n}a, J. M. Finding Consensus Bayesian Network Structures. {\em Journal of Artificial Intelligence Research}, 42:661-687, 2011. 

\bibitem{Penna2012}
Pe\~{n}a, J. M. Learning AMP Chain Graphs under Faithfulness. In {\em Proceedings of the 6th European Workshop on Probabilistic Graphical Models}, 251-258, 2012.

\bibitem{Penna2014}
Pe\~{n}a, J. M. Marginal AMP Chain Graphs. {\em International Journal of Approximate Reasoning}, 55:1185-1206, 2014.

\bibitem{Pennaetal.2014}
Pe\~{n}a, J. M., Sonntag, D. and Nielsen, J. D. An Inclusion Optimal Algorithm for Chain Graph Structure Learning. In {\em Proceedings of the 17th International Conference on Artificial Intelligence and Statistics}, 778-786, 2014.

\bibitem{Raiffa1968}
Raiffa, H. {\em Decision Analysis: Introductory Lectures on Choices under Uncertainty}. Addison-Wesley, 1968.

\bibitem{Richardson1998}
Richardson, T. S. Chain Graphs and Symmetric Associations. In {\em Learning in Graphical Models}, 231-260, 1998.

\bibitem{RichardsonandSpirtes2002}
Richardson, T. and Spirtes, P. Ancestral Graph Markov Models. {\em The Annals of Statistics}, 30:962-1030, 2002.

\bibitem{SadeghiandLauritzen2012}
Sadeghi, K. and Lauritzen, S. L. Markov Properties for Mixed Graphs. {\em Bernoulli}, 20:676-696, 2014.

\bibitem{SonntagandPenna2013}
Sonntag, D. and Pe\~{n}a, J. M. Chain Graph Interpretations and their Relations. In {\em Proceedings of the 12th European Conference on Symbolic and Quantitative Approaches to Reasoning under Uncertainty}, 510-521, 2013.

\bibitem{Studeny1998}
Studen\'{y}, M. Bayesian Networks from the Point of View of Chain Graphs. In {\em Proceedings of the 14th Conference on Uncertainty in Artificial Intelligence}, 496-503, 1998.

\bibitem{Studeny2005}
Studen\'{y}, M. {\em Probabilistic Conditional Independence Structures}. Springer, 2005.

\bibitem{StudenyandBouckaert1998}
Studen\'{y}, M. and Bouckaert, R. R. On Chain Graph Models for Description of Conditional Independence Structures. {\em The Annals of Statistics}, 26:1434-1495, 1998. 

\end{thebibliography}
\end{document}